\newtheorem{corollary}{Corollary}[section]
\newtheorem{proposition}{Proposition}[section]
\newtheorem{definition}{Definition}[section]
\newtheorem{example}{Example}[section]
\title[ICIC Express Letters, Vol.11, No.10, 2017]
      {On Evaluating the Quality of Rule-Based Classification Systems}
\author[Nassim Dehouche]{}
\begin{document}

\maketitle

\centerline{\scshape  Nassim Dehouche$^1$}
 \medskip
{\footnotesize
\centerline{$^1$Business Administration Division}
\centerline{Mahidol University International College} 
\centerline{Salaya, 73170, Thailand}
\centerline{Nassim.deh@mahidol.ac.th} }
\medskip

\centerline{Received; accepted }


\medskip

\begin{abstract}

{\em Two indicators are classically used to evaluate the quality of rule-based classification systems: \textit{predictive
accuracy}, i.e. the system's ability to successfully reproduce learning data and \textit{coverage}, i.e. the proportion of possible cases for which the logical rules constituting the system apply.
In this work, we claim that these two indicators may be insufficient, and additional measures of quality may need to be developed. We theoretically show that classification systems presenting "good"
predictive accuracy and coverage can, nonetheless, be trivially improved and illustrate this proposition
with examples.
To conceptualize our main claim, we characterize a property of reducibility. A classification
system is said to be reducible, if and only if, its constituent rules can be replaced by a subset
of their elementary conditions, while preserving the quality of the system. We derive a time-efficient
constructive algorithm to test this property and to improve a system's predictive accuracy and coverage in case of a positive
response. Furthermore, we provide a set of sufficient conditions that can be used to detect non-reducibility
and thus validate rule-based classification systems. We use the proposed approach to evaluate a previously published work applied to a public dataset pertaining to the business bankruptcy prediction, using three popular machine learning approaches (namely Genetic Algorithms, Inductive learning
and Neural Networks). The results of this application
support our main claim. We conclude this paper by suggesting that a classification system’s ability to clarify trade-offs between attributes should be measured, and used as an additional performance indicator. A possible further development of this work consists in developing such an indicator.\\}\\
{\bf Keywords:} Expert Systems, Machine Learning, Classification, Bankruptcy Prediction.

\end{abstract}

\section{Introduction}

Machine learning is defined as the study of computational methods for improving performance by mechanizing the acquisition of knowledge from experience \cite{simon}. As data collection and storage become easier and cheaper, there has been a growing interest in machine learning to facilitate the extraction of quality knowledge from databases and numerous techniques have been proposed to perform this task. A large proportion of knowledge discovery and extraction problems can be formulated as classification problems. This work is concerned with the tasks of evaluating the accuracy and coverage of such techniques and is organized as follows. Section 2. reviews some relevant publications to this research\footnote{This article is the full version of an abstract entitled "On evaluating the quality of machine learning classification methods" presented at The Second International Conference on Mathematics and Statistics (AUS-ICMS '15), American University of Sharjah, Al Sharjah, UAE; 04/2015. }. Section 3. formally states the problem of evaluating the quality of a rule-based classification system, and section 4. introduces the algorithm we propose for the validation and reduction of such systems. This algorithm is applied to a real case pertaining to business bankruptcy prediction, in section 5., highlighting the need for validation and reduction before deploying a rule-based classification system. Finally, section 6. concludes this work with perspective for further development of the proposed approach.

\section{Related work}
The problem of inducing a minimal classification system being NP-Hard, it is generally tackled through the use of heuristic or meta-heuristic approaches \cite{kim,BELACEL}. The field of data mining has also seen several contributions relevant to this problem. We can notably cite approaches based on closure systems \cite{Carpineto}, for unlabeled data, which aim at compacting a space of descriptions into a reduced system of relevant sets that conveys as much of the information contained in the complete space, as possible. Such approaches have also been successfully adapted to labeled data \cite{Garriga}, which makes them utilizable for classification tasks. Considerations of coverage, accuracy and compactness are typically at the heart of the design of these algorithms and are used as metrics to evaluate their quality. A popular approach in data mining consists, for instance, in identifying free (or generator) item-sets \cite{Rosa}, which are expected to generate minimal sets of descriptions, or non-redundant sub-groups \cite{Mario} \cite{Ruping}. However, it should be noted, that these considerations are inherent to the learning approach used but can not be embedded in other methods, such as meta-heuristics. In this work however, we propose a generic approach that can be used to reduce a classification system after rules have been learned. Moreover, our main concern is not syntactical redundancy but unnecessary specialization in the structure of classification rules. Thus, given a classification system, we aim at testing the reducibility of such a system and when possible generating a more compact set of rules that would improve both its the accuracy and coverage, as we shall demonstrate, quite dramatically, on the \textit{Bankruptcy Predicion} dataset from the UCI machine learning repository, that originated from a previously published work \cite{kim}, and that continues to be used as a benchmark in more recent works \cite{martin1,sun,cheng,koklu}. The present work can also be situated close to the literature on pruning \cite{Cohen}, with the notable distinction however that learning errors we aim at correcting do not originate from noisy or conflicting data but from the adoption of a blind optimization
approach, with accuracy and coverage as objectives. As we shall see through the aforementioned example, such inconsistencies can appear in trivial data-sets, the triviality of which can, in fact, be hidden by the the unnecessary complexity of the resulting classification systems.

\section{Problem Statement}
\begin{wrapfigure}{L}{0.5\textwidth}
\centering
\begin{center}

\includegraphics[height=338pt]{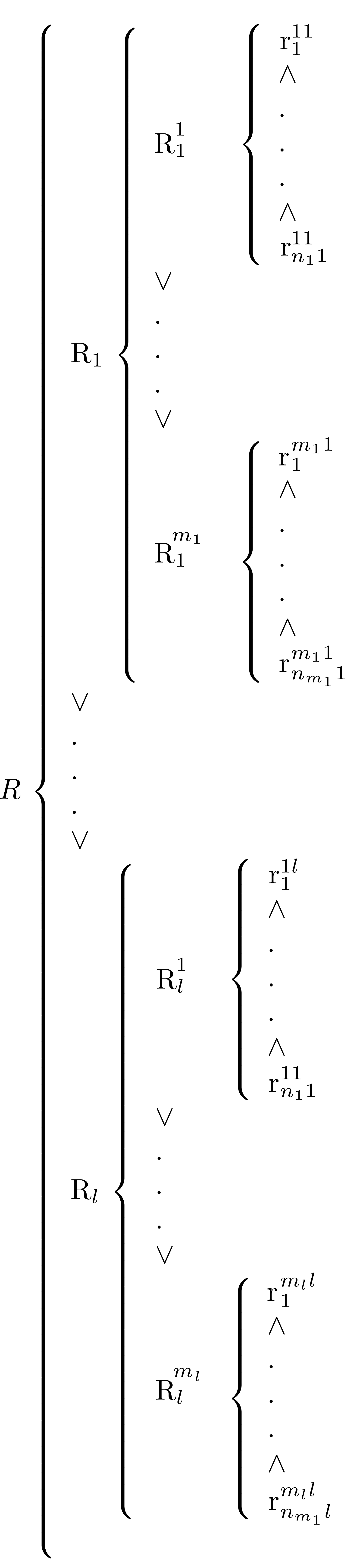}

\caption{\label{representation}Structure of a classification system} 

\end{center}
\end{wrapfigure}
Given a set $A$ of objects described by a set $H$ of attributes, and a set of classes $C=\{C_1,\dots,C_l\}$, a classification problem consists in inducing a set $R$ of logical rules, also known as a rule-based classification system, or an expert system, with an ability to assign objects from $A$ into classes from $C$. Attributes in $H$ can be of a qualitative or quantitative (discrete or continuous) nature.

Moreover, the domains of these attributes and the classes in $C$ can be non-ordered, in which case we would say that the classification problem if purely nominal, or there can exist a preference order among some of these domains, or between some classes. In this case the classification problem is said to be ordinal \cite{drsa}.

Let $H=\{h_1, \dots, h_q,\dots h_Q\}$ be the set of attributes considered to describe the objects in $A$, and $D=D_1 \times \dots \times D_q\times \dots D_Q$ the set of all possible descriptions for an object, each attribute $h_q\in H$ taking its values from domain $D_q$. \\

Let $R$ be a conjunctive normal form (CNF) classification system whose structure is presented in Figure \ref{representation}. In this set of logical rules, an overall rule $R_k,k\in\{1,\dots,l\}$ corresponds to each class $C_k$. An object can be assigned to any class, if it satisfies its corresponding overall rule. Consequently, we have $R=R_1\lor \dots\lor R_k \lor\dots \lor R_l$.

Each overall rule $R_k$ represents a disjunction of $m_k$ CNF assignment rules $R_k^{j},j\in\{1,\dots,m_k\}$, i.e. $R_k=R_k^{1}\lor \dots R_k^{m_k}, k\in \{1,\dots,l\}$.

Finally, each assignment rule $R_k^{j}$ consists of a conjunction of $n_k^{j}$ elementary conditions $r_k^{ji}, i\in \{1,\dots,n_k{j}\},j\in \{1,\dots,m_k\}, k\in\{1,\dots,l\}$, i.e. $R_k^{j}=r_k^{j1}\land\dots \land r_k^{j{n_k^{j}}}$. We can also view an assignment rule as a set of elementary condition. An elementary condition $r_k^{ji}, i\in \{1,\dots,n_k{j}\},j\in \{1,\dots,m_k\}, k\in\{1,\dots,l\}$ is of the form ``if \textit{condition} then $Class=C_k$'', the \textit{condition} part of each elementary condition being of the general form $h_q\in V$, with $q\in \{1,\dots,Q\}$, and $V\subset D^q$. The meaning of such a condition is ``attribute $h^q$ takes a value in domain V''.\\

In inductive machine learning a classification system $R$ is to be induced from a reference set $A' \subset A$ of objects whose classes are known. It should be noted that this reference set is often split into a \textit{training} set, i.e. assignments used for induction, and a \textit{test} set, i.e. assignments used for evaluation. However, this distinction is immaterial in the present work. Thus, we shall consider, without loss of generality, the whole of set $A'$ to be the training set, and the whole of set $A$ to be the test set. 

However, before deploying a classification system, we need to be able to quantify its performance and the quality of the classification it produces. The evaluation of competing machine learning methods for the induction of classification Systems has thus received more and more attention and numerous measures of quality exist for evaluating classification system \cite{simon}. However, they are typically based on the two following indicators \cite{kim} to be maximized. 
\begin{itemize}
\item \textit{Predictive Accuracy:} The percentage of objects in $A'$ that are correctly classified by the classification system $R$. This is an indicator of the ability of  a classification system to reproduce learning examples.
\item \textit{Coverage:} The proportion of objects in $A$ to which at least one overall rule in $R$ applies. This is a an indicator of a classification system's universality.
\end{itemize}

By virtue of the principle of \textit{Occam's Razor}, and for approximately equal values of the previous two indicators, the \textit{Compactness} of a classification system stemming from the principle of \textit{Occam's Razor}, is also considered as a measure of quality, albeit a less important and less formal one. This leads to favor simpler classification systems. In other word sets of logical rules that contain the least overall rules, assignment rules and elementary conditions are preferable.

\section{Proposed validation and reduction algorithm}

\begin{definition}
A classification system $R$ is said to be improvable if and only if, it contains an assignment rule $R_k^{j}=r_k^{j1}\land\dots \land r_k^{j{n_k^{j}}}$ that can be replaced by a strict subset of its elementary conditions, without degrading the accuracy and coverage of the system. In this case, we would say that assignment rule $R_k^{j}$ is reducible.   
\end{definition}

In other words, we are referring to classification systems in which some elementary conditions can be removed from an assignment rule, thus resulting in a more compact system with better coverage, since the conditions to satisfy an assignment rule are relaxed. Such a classification system would be considered improved if this operation does not degrade its predictive accuracy. 
\begin{proposition}
An assignment rule $R_k^{j}=r_k^{j1}\land\dots \land r_k^{j{n_k^{j}}}$ is reducible to a strict subset of its elementary conditions $r_k^{ji_1}\land\dots \land r_k^{ji_S}$, with $\{i_1,\dots,i_S\}\subset \{1,\dots,{n_k^{j}}\}$, if and only if, these elementary conditions are mutually exclusive with the condition part of any rule, assigning to a different category than $C_k$.
\end{proposition}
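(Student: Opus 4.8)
The plan is to prove the stated equivalence in two directions, after first disposing of coverage so that only predictive accuracy remains to be controlled. Write $\tilde{R}_k^{j}=r_k^{ji_1}\land\dots\land r_k^{ji_S}$ for the proposed reduced rule and let $R'$ be the system obtained from $R$ by replacing $R_k^{j}$ with $\tilde{R}_k^{j}$. Since $\tilde{R}_k^{j}$ is a conjunction over a strict subset of the conditions of $R_k^{j}$, every object satisfying $R_k^{j}$ also satisfies $\tilde{R}_k^{j}$; hence the set of objects satisfying the overall rule $R_k$ can only grow, and as coverage counts the objects of $A$ to which at least one overall rule applies, coverage never decreases under the replacement. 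This reduces the claim to showing that the replacement preserves accuracy on $A'$ if and only if the mutual-exclusivity condition holds. I would then isolate the objects whose label can change: exactly those in the \emph{gain region}, i.e. the objects whose description satisfies $\tilde{R}_k^{j}$ but not $R_k^{j}$, each of which newly satisfies $R_k$ under $R'$ and is therefore assigned to $C_k$, while every other object is classified identically by $R$ and $R'$.

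For sufficiency ($\Leftarrow$), assume $\tilde{R}_k^{j}$ is mutually exclusive with the condition part of every rule assigning to a class other than $C_k$. Then no object satisfying $\tilde{R}_k^{j}$, and in particular none in the gain region, can simultaneously satisfy a rule for some $C_{k'}\neq C_k$; consequently no such object was correctly classified to a competing class under $R$, so reassigning it to $C_k$ under $R'$ cannot convert a correct classification into an incorrect one. The number of correctly classified objects in $A'$ therefore cannot drop, accuracy is preserved, and together with the coverage observation $R_k^{j}$ is reducible.

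For necessity ($\Rightarrow$) I would argue by contraposition: if $\tilde{R}_k^{j}$ is not mutually exclusive with the condition part of some assignment rule $R_{k'}^{j'}$ with $k'\neq k$, there is a description satisfying both, and an object carrying it is assigned to $C_k$ by $R'$ while also satisfying the competing rule for $C_{k'}$, a conflict that misclassifies it with respect to at least one class and hence degrades accuracy, so $R_k^{j}$ is not reducible. I expect the main obstacle to lie precisely in this direction, for two reasons that must be handled explicitly. First, the overlap is a statement about the description space, whereas accuracy is measured on the finite sample $A'$, so I must invoke the paper's identification of $A$ with the full test universe (equivalently, assume every satisfiable overlapping description is realized by an object) to turn the overlap into an actual misclassification. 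Second, the comparison of accuracies is only well posed once the conflict-resolution semantics for objects firing rules of several classes is fixed; adopting any reasonable convention, for instance counting a multiply-assigned object as correct only when all fired rules agree with its true class, makes both implications go through.
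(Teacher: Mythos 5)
Your proposal follows essentially the same route as the paper's own proof: coverage is dispatched by the monotonicity of weakening a conjunction, sufficiency by noting that mutual exclusivity prevents any previously correct assignment from being overturned, and necessity by the conflict created when the reduced rule overlaps a rule for a competing class. The only real difference is that you make explicit the idealized assumptions (realizability of the overlapping descriptions in the sample, and a fixed conflict-resolution semantics for multiply-assigned objects) that the paper's informal argument leaves implicit.
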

\begin{proof}
Since we want to remove some elementary conditions from an assignment rule, we must make sure that the remaining reduced subset of elementary conditions does not overlap with the condition part a rule assigning to a different category. Indeed, that would create a conflicting assignment, which would reduce the overall accuracy of the classification system. Moreover, the fact that we are thus expanding the condition part of an assignment rule, the objects previously satisfying this rule remain covered. Therefore, this operation cannot degrade the previous coverage of the classification system.
\begin{itemize}
\item Mutual exclusiveness implies that reducing an assignment rule can only lead to covering previously uncovered cases, thus improving coverage without degrading accuracy (and typically improving it as well).
\item Mutual non-exclusiveness implies that reducing an assignment rule leads to the creation of contradictory classification for some previously accurately classified cases, thus reducing accuracy.
\end{itemize}
\end{proof}
Algorithm \ref{alg} can be directly deduced from \textit{property 1}. It verifies the reducibility of the assignment rules constituting a classification system, and trivially improve it in case of a positive answer. 
\begin{algorithm}[h!]

 \KwData{A classification system $R$}
 \KwResult{Validating or else trivially improving $R$}

\For{each overall rule $R_k \in R$} 
{
\For{each assignment rule $R_k^{j}\in R_k$ }
{
\For{each elementary condition $r_k^{ji} \in R_k^{j}$ }
{
$PossibilityReduction\leftarrow True$;
\For{each overall rule $R_t \in R$ }
{

   \eIf{$t \neq k$} 
{
     
  \eIf{$R_k^{j} - r_k^{ji} \cap R_t \neq\emptyset$}   
    { 
     $PossibilityReduction\leftarrow False$;
     
     }

     }

}
  \eIf{$PossibilityReduction= True$}   
    { 
    //Elementary condition $r_k^{ji}$ can be removed from assignment rule $R_k^{j}$.\\
$R_k^{j}\leftarrow R_k^{j}- r_k^{ji}$;   
     
     } 
     
     } 

}

}
 
 \caption{\label{alg}Algorithm to verify reducibility and trivially improve classification systems}
\end{algorithm}

The following negative test can also be deducted from property 1.
\begin{corollary}
For every pair of rules assigning to different classes, an
attribute that does not appear in both rules cannot be part
of a clause or conjunction of clauses that would reduce the
expert system.
\end{corollary}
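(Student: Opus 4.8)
The plan is to prove the contrapositive of Corollary 1 by leaning directly on Proposition 1, which tells us that a subset of elementary conditions reduces the system if and only if those conditions are mutually exclusive with the condition part of every rule assigning to a different class. So suppose, toward a contradiction, that there exist two rules $R_k^{j}$ and $R_t^{j'}$ with $t\neq k$ (assigning to distinct classes $C_k$ and $C_t$), and that some attribute $h_q$ appears in $R_k^{j}$ but does \emph{not} appear in $R_t^{j'}$, yet the elementary condition $r_k^{ji}$ on $h_q$ (or a conjunction of clauses containing it) is part of a valid reduction. I would then derive a contradiction with the mutual-exclusiveness requirement of Proposition 1.

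First I would make precise what ``mutually exclusive'' means at the level of descriptions. Two assignment rules are mutually exclusive when no object description in $D=D_1\times\dots\times D_Q$ simultaneously satisfies both conjunctions; equivalently, their intersection as sets of descriptions is empty, matching the $R_k^{j}\cap R_t\neq\emptyset$ test used in Algorithm 1. The key observation is that an attribute $h_q$ not constrained in $R_t^{j'}$ ranges freely over its whole domain $D_q$ for any description satisfying $R_t^{j'}$. Consequently, the condition $r_k^{ji}$ of the form $h_q\in V$ cannot, by itself, exclude any description of $R_t^{j'}$: whatever $V\subset D_q$ is, there exist descriptions satisfying $R_t^{j'}$ whose $h_q$-coordinate lies in $V$ (since $h_q$ is unconstrained there, provided $V\neq\emptyset$, which holds for any meaningful condition).

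The central step is then to show that this single attribute's condition cannot be what ``separates'' the two rules, so removing it loses nothing in terms of exclusiveness---which is the opposite of what a reducing clause must do. More carefully, Proposition 1 requires that the \emph{retained} subset of conditions be mutually exclusive with $R_t^{j'}$; a condition on an attribute absent from $R_t^{j'}$ contributes nothing to producing that exclusiveness, because the exclusiveness must already be realized through attributes common to both rules. I would formalize this by projecting onto the common attributes: if $R_k^{j}$ and $R_t^{j'}$ are mutually exclusive, the incompatibility is witnessed by at least one shared attribute whose admissible value-sets in the two rules are disjoint. An attribute appearing in only one of the rules never yields such a witness. Hence $h_q$ cannot belong to a clause or conjunction of clauses whose retention achieves the separation demanded by Proposition 1.

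The main obstacle I anticipate is the somewhat loose phrase ``part of a clause or conjunction of clauses that would reduce the expert system,'' which must be pinned down: I would interpret it as the statement that the reducing subset $\{r_k^{ji_1},\dots,r_k^{ji_S}\}$ guaranteeing mutual exclusiveness can always be chosen to avoid attributes not shared with the competing rule, and more strongly that such a non-shared attribute is useless for establishing exclusiveness against that particular competing rule. Care is needed because a single attribute may be shared with one competing rule but not another, so the claim is naturally pairwise---exactly as the corollary is phrased (``for every pair of rules''). The cleanest route is therefore to argue pairwise and invoke the projection-onto-common-coordinates argument, after which the corollary follows immediately as the contrapositive of the exclusiveness criterion in Proposition 1.
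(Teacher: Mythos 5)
Your proof is correct and follows essentially the same route the paper intends: the paper gives no explicit proof, stating only that the corollary ``can also be deducted from property 1,'' and your argument is precisely that deduction from Proposition 1. Your shared-attribute witness argument (mutual exclusiveness of two conjunctions of conditions $h_q\in V$ must be realized by some attribute common to both with disjoint admissible value-sets, so a condition on an attribute unconstrained in the competing rule can never supply the exclusiveness Proposition 1 requires) supplies the rigor the paper leaves implicit, including the correct pairwise reading of the statement.
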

The following example shows illustrates the property of reducibility and the way the approach we propose can be used to identify unnecessarily complex classification systems and to reduce them to their simplest form,  while improving their quality indicators. 

\begin{example}
In order to give an intuitive understanding of how \textit{property} $1$ applies to a classification system, we consider a toy binary classification problem in which rules classifying objects photographed by a camera at the entrance of a building are to be induced. The photographed objects are to be recognized as being a car of or not being a car. Table \ref{toydata} presents the reference data set for this problem.

\begin{table}
\centering

\begin{center}
\begin{tabular}{|c|c|c|c|}
\hline
\textbf{Object}&\textbf{Number of passengers $(P)$}&\textbf{Number of wheels $(W)$}&\textbf{Class}\\
\hline

1 &$2$&  $3$ or less& Not car\\
2& $20$& $4$ or more& Not car\\
3& $2$& $4$ or more& Car\\
4& $12$& $4$ or more & Not Car\\
5& $1$&	 $3$ or less& Not car\\

\hline
\end{tabular}
\end{center}
\caption{\label{toydata}Illustrative data set}
\end{table}
Let us suppose that the classification system presented in table \ref{toyrules} is induced. It can be noted that objects 1 to 4 in table \ref{toydata} are accurately classified by this set of rules, but object 5 satisfies no assignment rule. Thus the predictive accuracy and coverage of the classification system, with regards to this data set are both $\frac{4}{5}=80 \%$. According to \textit{property} $1$., assignment rule $R_1^{1}$ can be reduced to the elementary condition ``if $W=3$ or less then Not car'', this elementary condition being mutually exclusive with the condition part of rules $R_2^{1}$. For the same reason, rule $r_1^{2}$ can be reduced to elementary condition ``if $P>10$ then Not car'', whereas rule $R_2^{1}$ is not reducible, because each one of its elementary conditions covers some objects that are also covered by the condition parts of rule $R_1^{1}$ and rule $r_1^{2}$. Thus the previous classification system can be trivially improved, as shown in table \ref{improve}, resulting in $80 \%$ accuracy and coverage.  Section 5. proposes a similarly-structured albeit more realistic application.

\begin{table}
\centering

\begin{center}
\begin{tabular}{|c|c|c|}
\hline
\textbf{Assignment rule}&\textbf{Elementary conditions}&\textbf{Class}\\
\hline

$R_1^{1}$ &$P>1$ and $W=3$ or less&   Not car\\
$R_1^{2}$ &$P>10$ and $W=4$ or more&   Not car\\
$R_2^{1}$ &$P=2$ and $W=4$ or more&   Car\\
\hline
\end{tabular}
\end{center}
\caption{\label{toyrules}Illustrative classification system}
\end{table}

\begin{table}
\centering

\begin{center}
\begin{tabular}{|c|c|c|}
\hline
\textbf{Assignment rule}&\textbf{Elementary conditions}&\textbf{Class}\\
\hline

$R_1^{1}$ &$W=3$ or less&   Not car\\
$R_1^{2}$ &$P>10$ &   Not car\\
$R_2^{1}$ &$P=2$ and $W=4$ or more&   Car\\
\hline
\end{tabular}
\end{center}
\caption{\label{improve}Reduced form of the illustrative classification system}
\end{table}
\end{example}

\section{Application: The discovery of experts’ decision rules from qualitative bankruptcy data}This applicative section is based on a work published in 2003 \cite{kim}, which notably proposes a genetic algorithm-based data mining method for discovering bankruptcy decision rules from experts’ qualitative decisions. Data related to this article were made public on the UCI machine learning repository, in 2014 \cite{uci} and several ulterior works have built upon its results \cite{martin1,sun,cheng}, the most recent of which being \cite{koklu}, which uses the same data-set. As we shall explain in this section, this state of affair is unfortunate, as this problem and data-set are trivial. However, the triviality of this problem has been masked by the use of a blind optimization approach (meta-heuristic) for the classical optimization of the accuracy and coverage. 
In this problem, a group of experts evaluates 772 manufacturing and service companies, through 6 economic factors:

\begin{itemize}
\item  Industry Risk (IR).
\item  Management Risk (MR).
\item  Financial Flexibility (FF).
\item  Credibility (CR).
\item  Competitiveness (CO).
\item  Operating Risk (OP).
\end{itemize}
Possible values for each factor are Negative (N), Average (A), and Positive (P) and two classes are observed Bankruptcy (B) and Non-bankruptcy (NB).

\begin{table}[h!]
\begin{center}
\resizebox{\columnwidth}{!}{
\begin{tabular}{|c|l|}

\hline
\textbf{Assignment rule}&\textbf{Elementary conditions}\\
\hline
Rule1 &\small{IF FF is positive and CO is positive THEN Nonbankrupt}\\
Rule2 &\small{IF FF is positive and CO is average and CR is average
or positive THEN Nonbankrupt}\\
Rule3 &\small{IF FF is positive and CO is average and CR is negative
THEN Bankrupt}\\
Rule4 &\small{IF FF is positive and CO is negative and MR is average
or positive THEN Nonbankrupt}\\
Rule5 &\small{IF FF is positive and CO is negative and MR is negative
THEN Bankrupt}\\
Rule6 &\small{IF FF is average and MR is positive and CO is average
or positive THEN Nonbankrupt}\\
Rule7 &\small{IF FF is average and MR is positive and CO is negative
THEN Bankrupt}\\
Rule8 &\small{IF FF is average and MR is average and OP is average
or positive THEN Nonbankrupt}\\
Rule9 &\small{IF FF is average and MR is average and OP is negative
THEN Bankrupt}\\
Rule10 &\small{IF FF is average and MR is negative THEN Bankrupt}\\
Rule11 &\small{IF FF is negative and OP is positive THEN Nonbankrupt}\\
Rule12 &\small{IF FF is negative and OP is average and IR is average
or positive THEN Nonbankrupt}\\
Rule13 &\small{IF FF is negative and OP is average and IR is negative
THEN Bankrupt}\\
Rule14 &\small{IF FF is negative and OP is negative and CR is average
or positive THEN Nonbankrupt}\\
Rule15 &\small{IF FF is negative and OP is negative and CR is negative
and MR is average or positive THEN Nonbankrupt}\\
Rule16 &\small{IF FF is negative and OP is negative and CR is negative and
MR is negative THEN Nonbankrupt}\\

\hline

\end{tabular}
}
\caption{\label{IL}Rules generated by an inductive learning method in \cite{kim}}
\end{center}
\end{table}

\begin{table}
\begin{center}
\resizebox{\columnwidth}{!}{
\begin{tabular}{|c|l|}
\hline
\textbf{Assignment rule}&\textbf{Elementary conditions}\\
\hline
Rule1& \small{IF FF is positive and CO is average or positive THEN
Nonbankrupt}\\
Rule2 &\small{IF FF is positive and CO is negative and MR is average
or positive THEN Nonbankrupt}\\
Rule3 &\small{IF FF is positive and CO is negative and MR is negative
THEN Bankrupt}\\
Rule4 &\small{IF FF is average and MR is positive THEN Nonbankrupt}\\
Rule5 &\small{IF FF is average and MR is average and OP is average
or positive THEN Nonbankrupt}\\
Rule6 &\small{IF FF is average and MR is average and OP is negative
THEN Bankrupt}\\
Rule7 &\small{IF FF is average and MR is negative THEN Bankrupt}\\
Rule8 &small{IF FF is negative and OP is positive THEN Nonbankrupt}\\
Rule9 &\small{IF FF is negative and OP is average and IR is average
or positive THEN Nonbankrupt}\\
Rule10 &\small{IF FF is negative and OP is average and IR is negative
THEN Bankrupt}\\
Rule11 &\small{IF FF is negative and OP is negative THEN Nonbankrupt}\\
Rule12 &\small{IF FF is negative and OP is negative and CR is negative and MR is negative THEN Nonbankrupt}\\

\hline

\end{tabular}
}
\caption{\label{NN}Rules generated by a neural networks-based algorithm in \cite{kim}}

\end{center}
\end{table}

\begin{table}
\begin{center}
\resizebox{\columnwidth}{!}{
\begin{tabular}{|c|l|}
\hline
\textbf{Assignment rule}&\textbf{Elementary conditions}\\
\hline
Rule1 & {IF FF is average or positive and CR is average or positive and CO is average or positive THEN Nonbankrupt}\\
Rule2 &{IF FF is negative and CR is negative and CO is negative
and OP is negative THEN Bankrupt}\\
Rule3 &{IF FF is positive and CO is positive THEN Nonbankrupt}\\
Rule4 &{IF IR is average or positive and CR is average or positive
and CO is positive THEN Nonbankrupt}\\
Rule5 &{IF IR is average or positive and MR is average or positive
and FF is average or positive and CO} \\
&{is average or positive and OP is average or positive THEN Nonbankrupt}\\
Rule6 &{IF MR is average or positive and CR is average or positive
and CO is average or positive THEN Nonbankrupt}\\
Rule7 &{IF MR is negative or average and FF is negative and CR
is negative CO is negative THEN Bankrupt}\\
Rule8 &{IF IR is positive and MR is average or positive and CO
is positive THEN Nonbankrupt}\\
Rule9 &{IF IR is average or positive and CO is positive and OP
is average or positive THEN Nonbankrupt}\\
Rule10 &{IF MR is negative and FF is negative and CR is negative and
CO is negative or average and OP is negative or}\\ 
& {average
THEN Bankrupt}\\
Rule11 &{IF IR is negative and MR is negative and FF is negative
and CO is negative THEN Bankrupt}\\

\hline
\end{tabular}
}
\caption{\label{GA}Rules generated by a genetic algorithm in \cite{kim}}
\end{center}

\end{table}
The authors design and apply a genetic algorithm for the induction of a classification system for this problem and compare its quality, for the classification of 232 cases, to those of two other approaches: an inductive learning method and a neural networks algorithm, with satisfying results. Tables \ref{GA}, \ref{IL} and  \ref{NN} respectively describes the classification systems induced by each one of these techniques, and table \ref{res} summarizes their quality indicators (accuracy, coverage, and number of rules as a measure of compactness).

\begin{table}[h!]
\begin{center}

\begin{tabular}{|c|c|c|c|}

\hline
\textbf{Technique}&\textbf{No of Rules}&\textbf{Coverage}&\textbf{Accuracy}\\
\hline
GA &11& 18.5& 94.0\\
Inductive learning &16&15.3& 89.7\\
Neural networks &12&15.6& 90.3\\

\hline
\end{tabular}
\caption{\label{res}The performances of data mining techniques in \cite{kim}}
\end{center}

\end{table}
\begin{table}
\begin{center}
\begin{tabular}{|c|l|}
\hline
\textbf{Assignment rule}&\textbf{Elementary conditions}\\
\hline
Rule1& \small{IF CO is positive THEN
Nonbankrupt}\\
Rule2& \small{IF CO is negative THEN
Bankrupt}\\
Rule3& \small{IF CO is average and FF is negative and CR is negative}\\ 
& {THEN
Bankrupt}\\
Rule3& \small{IF CO is average and FF is not negative or CR is not negative}\\ & {THEN
Nonbankrupt}\\

\hline
\end{tabular}
\caption{\label{our}Reduced form of the genetic algorithm-induced classification system}
\end{center}
\end{table}

By applying the proposed algorithm, we can conclude that the main classification system, based on a genetic algorithm can be highly improved, as evidenced by table \ref{our}, which presents the reduced form of this classification system. Indeed, this reduced form results in full accuracy and coverage, which means that the data set is trivial, as can be verified in \cite{uci}. Moreover, classification systems induced by the other approaches cannot be reduced. This is due to the fact that the assignment rules in these systems include cases of conflict inter-attribute (no attribute is overly important), and intra-attribute (high and low values are represented in each elementary condition). The fact that assignment rules cover these conflicting cases is inversely related to their ability to be reduced. Indeed, it reduces the likelihood of finding mutually exclusive elementary conditions, as they cover a larger range of alternatives. Measuring a classification system's ability to resolve conflicting cases seems however strongly dependent on the definition of attributes, values and classes of the problem at hand. However, analyses of this aspect, such as the existing work for multicriteria aggregation procedures \cite{ROY}, could be conducted for specific rule-based classification systems. 
\section{Conclusion}
In this brief article, we have argued that predictive accuracy and coverage are not sufficient indicators of a classification system's quality, and that any indicator of quality can only be valid if measured for the most reduced possible form of a classification system. Moreover, values for these indicators can mask the triviality of a data set with unnecessary complexity. We have thus proposed a procedure to perform such a reduction. Finally, we have suggested that the reducibility of a classification system seems to be tied to its ability to clarify trade-offs. Investigating this link and building indicators for this ability are some of the possible perspectives for developing this work.

%

%
%

\end{document}